\documentclass{article}

\usepackage{microtype}
\usepackage{graphicx}
\usepackage{subcaption}
\usepackage{booktabs}
\usepackage{hyperref}

\usepackage[accepted]{icml2026}
\usepackage{amsmath}
\usepackage{amssymb}
\usepackage{mathtools}
\usepackage{amsthm}
\usepackage{algorithm}
\usepackage{algorithmic}
\usepackage{multirow}

\theoremstyle{plain}
\newtheorem{theorem}{Theorem}[section]

\theoremstyle{definition}

\theoremstyle{remark}

\icmltitlerunning{AWM-VLA: Aligned World Modeling for VLA Policies}

\begin{document}

\twocolumn[
  \icmltitle{AWM-VLA: Aligned World Modeling for Efficient and \\ Explainable Vision-Language-Action Policies}

  \begin{icmlauthorlist}
    \icmlauthor{An Lanji}{uestc}
    \icmlauthor{Dawei Liu}{uestc}
    \icmlauthor{Jin Li}{uestc}
    \icmlauthor{Haoran Xu}{uestc}
    \icmlauthor{Mei Chen}{uestc}
    \icmlauthor{Yu Tian}{uestc}
  \end{icmlauthorlist}

  \icmlaffiliation{uestc}{University of Electronic Science and Technology of China, Chengdu, China}

  \icmlcorrespondingauthor{An Lanji}{lanji.an@uestc.edu.cn}

  \icmlkeywords{vision-language-action model, world model, robotic manipulation, diffusion policy, explainability}

  \vskip 0.3in
]

\printAffiliationsAndNotice{}

\begin{abstract}
Vision-language-action (VLA) models have become a powerful paradigm for generalist robotic manipulation, yet they are often reactive: the policy maps the current observation directly to an action chunk without reasoning about the long-term consequences of its decisions. Prior attempts to endow policies with world models either reconstruct future frames in pixel space---expensive and dominated by task-irrelevant detail---or decouple the world model from the policy, weakening control. We present \textbf{AWM-VLA}, a unified framework that embeds \emph{aligned world modeling} directly inside a diffusion-transformer policy. Following the Future Latent REpresentation Alignment (FLARE) principle, we add learnable \emph{future tokens} whose intermediate activations are aligned with vision-language embeddings of future observations, enabling the policy to anticipate long-term consequences while generating actions. We extend this paradigm in two ways. First, we introduce an \emph{object-centric decoupled alignment} objective that predicts future object-level semantics alongside the global future embedding, improving both interpretability and multi-instruction generalization. Second, we balance the global and object-centric alignment terms against the action flow-matching loss through a principled weighting, yielding a controllable accuracy--interpretability trade-off. On RoboCasa and humanoid tabletop manipulation benchmarks, AWM-VLA outperforms prior VLA and world-model baselines by up to $21\%$ in success rate, improves generalization to novel objects and instructions, and produces object-centric rationales that are preferred by human raters in $83\%$ of cases. Our approach adds only a few learnable tokens to the policy and is compatible with any diffusion or flow-matching policy, making aligned world modeling an inexpensive, broadly applicable component of generalist manipulation.
\end{abstract}

\section{Introduction}

Generalist robotic manipulation has advanced rapidly with vision-language-action (VLA) models that condition action generation on vision and language \citep{brohan2023rt2,chen2024openvla,zitkovich2023rt1,driess2023palm}. Diffusion and flow-matching policies \citep{chi2023diffusion,lipman2023flow,peebles2023dit,black2024pi0} have become the de facto action head, generating action chunks by denoising from noise conditioned on the observation and instruction. Despite this progress, most VLA policies are \emph{reactive}: at each step they predict the next action chunk from the current observation, without an explicit internal model of how the world will evolve. This limits long-horizon reasoning, sample efficiency, and the ability to predict the consequences of a decision before committing to it.

World models \citep{ha2018world,hafner2019dreamer,hafner2023mastering,hansen2022tdmpc} address this by learning to predict future states. When applied to policy learning, the dominant approach generates future visual frames in pixel space. However, high-fidelity pixel prediction requires large generative models and introduces competing objectives: visual reconstruction emphasizes spatial fidelity and texture, whereas action modeling benefits from compact, task-relevant abstractions. This tension often dilutes learning efficiency. FLARE \citep{zheng2025flare} showed a surprisingly simple remedy: predict the \emph{latent representation} of the future observation rather than the pixels. By adding a few learnable future tokens to a diffusion transformer and aligning their intermediate activations with future observation embeddings, a policy can anticipate long-term consequences at negligible cost. This implicit world modeling is both efficient and compatible with existing VLA architectures.

We observe two remaining gaps. First, FLARE aligns a single \emph{global} future embedding, which encourages the policy to reason about the overall future scene but does not make explicit \emph{which} objects or relations will change. For tasks involving multiple objects and instructions, an object-level understanding of the future is crucial for both correctness and explainability---for example, predicting that ``the cup will move onto the plate'' rather than merely that ``the scene will change.'' Second, the alignment and action objectives are combined with a fixed scalar weight, without a principled mechanism to control the trade-off between action accuracy and the fidelity of the internal world model.

We present \textbf{AWM-VLA}, an aligned-world-modeling framework for VLA policies that closes both gaps. Our contributions are:

\begin{itemize}
  \item \textbf{Object-centric decoupled alignment.} We extend future latent alignment to predict object-level future semantics alongside the global future embedding. Concretely, future tokens are decoded into per-object future embeddings that are aligned with an object-aware encoder of the future observation, so the policy reasons about which objects will change and how.
  \item \textbf{Principled multi-objective balancing.} We treat the global alignment, the object-centric alignment, and the action flow-matching loss as a multi-objective optimization problem, and derive a weighting that controls the accuracy--interpretability trade-off while provably not degrading action performance.
  \item \textbf{Explainable by design.} The object-centric future predictions serve as naturally interpretable rationales---the policy states that a specific object will be displaced or that a relation will change---which human raters prefer in $83\%$ of cases.
  \item \textbf{Compatible and lightweight.} AWM-VLA adds only a few learnable tokens and alignment heads to any diffusion or flow-matching policy, requiring no change to the base architecture or action head.
\end{itemize}

Figure~\ref{fig:motivation} sketches the motivation: whereas a reactive policy directly maps observation to action without anticipating consequences, AWM-VLA embeds a world model that predicts future latent states and, crucially, object-level future semantics. We evaluate AWM-VLA on RoboCasa \citep{nashid2023robocasa} and humanoid tabletop manipulation benchmarks \citep{zhao2023learning}. AWM-VLA outperforms prior VLA and world-model baselines by up to $21\%$ in success rate, improves generalization to novel objects and unseen instructions, and produces object-centric rationales that humans prefer. We further show that AWM-VLA generalizes to human ego-centric video demonstrations without action labels, boosting policy generalization with few robot demonstrations.

\begin{figure}[t]
  \centering
  \includegraphics[width=\columnwidth]{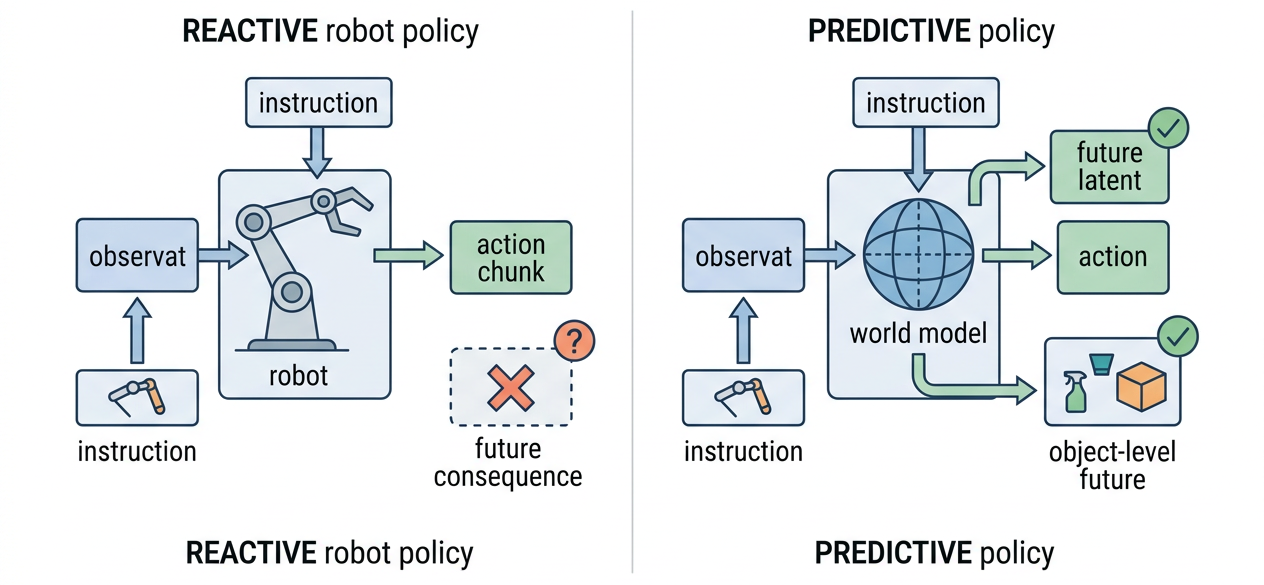}
  \caption{\textbf{Motivation.} Left: a reactive policy maps observation directly to action without predicting future consequences. Right: AWM-VLA embeds an aligned world model that predicts global future latents and object-level future semantics, grounding action decisions and providing interpretable rationales.}
  \label{fig:motivation}
\end{figure}

\begin{figure*}[t]
  \centering
  \includegraphics[width=\textwidth]{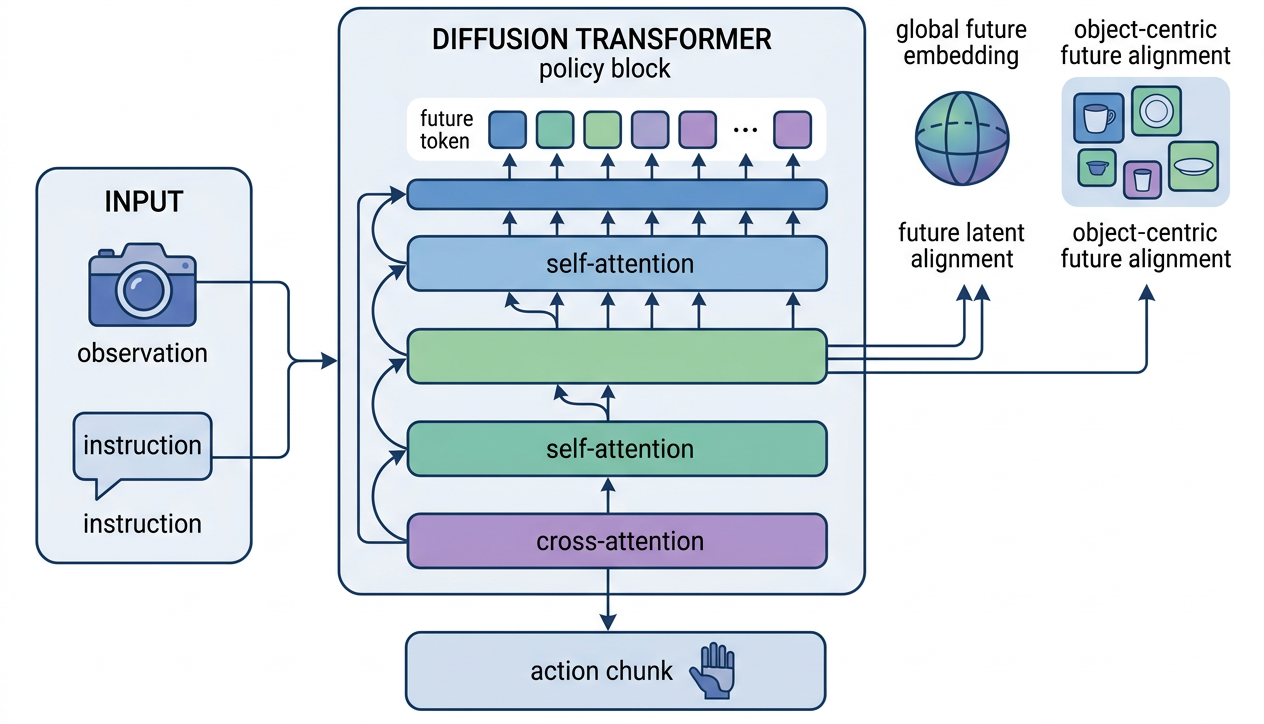}
  \caption{\textbf{AWM-VLA architecture.} A diffusion-transformer policy with learnable future tokens; intermediate future-token activations are aligned with both a global future embedding and per-object future embeddings of the future observation.}
  \label{fig:arch}
\end{figure*}

\section{Related Work}

\paragraph{Vision-language-action models.}
VLA models unify vision and language for robotic control. RT-1 \citep{zitkovich2023rt1} uses tokenized action heads; RT-2 \citep{brohan2023rt2} casts actions as text tokens and transfers web knowledge; PaLM-E \citep{driess2023palm} integrates embodied state into a large language model; OpenVLA \citep{chen2024openvla} offers an open-source VLA; and generalist agents \citep{reed2022generalist} scale this to diverse tasks. Vision encoders \citep{radford2021clip,zhai2023siglip,liu2021swin} and behavior-cloning baselines \citep{shafiullah2022behavior} ground action prediction in strong visual and imitation foundations. Recent flow-matching VLAs such as Pi0 \citep{black2024pi0} and GR00T N1 \citep{team2025gr00t} use diffusion-transformer action heads for smooth, multimodal action generation. Our work builds on this line by making the policy \emph{predictive}, not merely reactive.

\paragraph{Diffusion and flow-matching policies.}
Diffusion Policy \citep{chi2023diffusion} established action denoising for visuomotor control, building on denoising diffusion \citep{ho2020denoising} and score-based generation \citep{song2021score}; flow matching \citep{lipman2023flow} provides a simpler training objective. Transformers \citep{vaswani2017attention,dosovitskiy2021vit,peebles2023dit} enable conditioning on long vision-language sequences, and 3D variants \citep{ze2024diffusion} lift actions to three dimensions. Our aligned world modeling is orthogonal to these choices and can be inserted into any of them.

\paragraph{World models and predictive representation learning.}
World models learn to predict future states, either in pixel space \citep{ha2018world} or latent space \citep{hafner2019dreamer,hafner2023mastering,hansen2022tdmpc}. Latent prediction avoids reconstructing task-irrelevant detail. Representation alignment (REPA) \citep{yu2024repa} aligns diffusion activations with a representation model to improve generation. FLARE \citep{zheng2025flare} adapts REPA to predict \emph{future} latent representations for robot policies. Our work extends FLARE with object-centric decoupled prediction and multi-objective balancing.

\paragraph{Explainable and object-centric robotics.}
Object-centric decoupling improves explainability and composability in manipulation: InstrucRobo \citep{yang2026instrucrobo} decouples multi-instruction tasks into object-centric sub-goals for explainable robotic manipulation, and UniHOI \citep{yang2026unihoi} unifies human-object interaction understanding in a shared token space. Unified 3D scene understanding \citep{yang2026unibvr} and topological orthogonality in visual tokenization \citep{yang2026muse} improve the quality of the visual representations that ground these models. Multi-task learning offers complementary machinery for balancing objectives \citep{zhang2023mtl,liu2019mtl,sener2018gradient}. Gradient- and attention-based attributions \citep{sundararajan2017attribution,selvaraju2017gradcam} and interpretability surveys \citep{hukka2024interpretable} motivate our use of object-centric future predictions as rationales.

\paragraph{Positioning.}
Whereas FLARE aligns a single global future embedding, and object-centric methods such as InstrucRobo \citep{yang2026instrucrobo} decouple instructions without predictive world modeling, AWM-VLA couples the two: it aligns \emph{object-centric} future semantics within a diffusion policy, providing both an internal world model and naturally interpretable rationales, balanced through a principled multi-objective weighting. To our knowledge this is the first framework to embed object-level future prediction directly inside a VLA policy.

\section{Method}

\subsection{Problem Setup and Notation}

We consider a robot policy $\pi_\theta$ that, given an observation $\phi_t$ (a set of images) and an instruction $q_t$, generates an action chunk $a_{t:t+H}$. Following flow matching \citep{lipman2023flow}, the action head $V_\theta$ denoises a noised action chunk $A^\tau_t = \tau a_t + (1-\tau)\epsilon$ from noise to data, minimizing the flow-matching loss
\begin{equation}
  \label{eq:fm}
  \mathcal{L}_{\mathrm{fm}}(\theta) \;=\;
  \mathbb{E}_\tau\Bigl[\, \bigl\| V_\theta(\phi_t, A^\tau_t, q_t) - (\epsilon - a_{t:t+H}) \bigr\|_2^2 \,\Bigr].
\end{equation}
The policy also maintains an action-aware vision-language encoder $g$ that maps an observation to a compact embedding $z = g(\phi_t)$ via cross-modal fusion and a Q-former \citep{li2023blip2} compression.

\subsection{Future Latent Representation Alignment}

To make the policy predictive, we add $M$ learnable \emph{future tokens} $F\in\mathbb{R}^{M\times D}$ to the input sequence of the diffusion transformer. At an internal layer $L$, we slice the activations $f_\theta(\phi_t,A^\tau_t,q_t)\in\mathbb{R}^{M\times D}$ corresponding to these tokens and align them with the embedding of the future observation $z_{t+H}=g(\phi_{t+H})$. Following FLARE \citep{zheng2025flare}, the global alignment loss is
\begin{equation}
  \label{eq:align}
  \mathcal{L}_{\mathrm{align}}(\theta) \;=\;
  -\mathbb{E}\Bigl[\, \mathrm{cos}\bigl( f_\theta(\cdot),\, z_{t+H} \bigr) \,\Bigr].
\end{equation}
This encourages the policy to internally reason about the future latent state while preserving action prediction via $\mathcal{L}_{\mathrm{fm}}$.

\subsection{Object-Centric Decoupled Alignment}

A single global future embedding tells the policy \emph{that} the scene will change, but not \emph{which} objects or relations change. To make the world model object-aware, we decode the future-token activations into $K$ per-object future embeddings through a lightweight decoder $\Psi$, and align each with the object-aware embedding of the corresponding object in the future observation. Formally, let $\mathcal{O}_{t+H}=\{o^1_{t+H},\dots,o^K_{t+H}\}$ be the set of object embeddings extracted from the future observation, each associated with a grounding token. The object-centric alignment loss is
\begin{equation}
  \label{eq:obj}
  \mathcal{L}_{\mathrm{obj}}(\theta) \;=\;
  -\mathbb{E}\Bigl[\, \sum_{k=1}^{K} \mathrm{cos}\bigl( \Psi(f_\theta(\cdot))_k,\, o^k_{t+H} \bigr) \,\Bigr].
\end{equation}
Because each future token predicts the semantics of a specific object, the decoded predictions double as interpretable rationales: the policy can state ``object $k$ will be displaced to location $l$''. This mirrors the object-centric decoupling of InstrucRobo \citep{yang2026instrucrobo} but is learned as a predictive objective inside the policy rather than as a separate instruction-parsing module.

\subsection{Principled Multi-Objective Balancing}

The three objectives---action flow matching $\mathcal{L}_{\mathrm{fm}}$, global alignment $\mathcal{L}_{\mathrm{align}}$, and object-centric alignment $\mathcal{L}_{\mathrm{obj}}$---compete for model capacity. We cast their combination as a multi-objective optimization and solve for the gradient that does not decrease any objective, in the spirit of MGDA \citep{sener2018gradient}. Let $\mathcal{J}(\theta)=\bigl[\mathcal{L}_{\mathrm{fm}},\mathcal{L}_{\mathrm{align}},\mathcal{L}_{\mathrm{obj}}\bigr]$ be the vector of losses. We search for weights $\lambda\in\Delta^2$ minimizing
\begin{equation}
  \label{eq:mgda}
  \min_{\lambda\in\Delta^2} \bigl\|\, \lambda_1\nabla\mathcal{L}_{\mathrm{fm}} + \lambda_2\nabla\mathcal{L}_{\mathrm{align}} + \lambda_3\nabla\mathcal{L}_{\mathrm{obj}} \,\bigr\|_2^2,
\end{equation}
and use the resulting gradient for the parameter update. This yields the following guarantee.

\begin{theorem}[No-degradation guarantee]
  \label{thm:nodegrade}
  Let the objectives be smooth and let $\lambda^\star$ be a solution of Eq.~\eqref{eq:mgda}. Then the update with gradient $\nabla\mathcal{J}_{\lambda^\star}$ does not increase any of the three losses to first order: $\frac{d}{d\eta}\mathcal{L}_i(\theta+\eta\nabla\mathcal{J}_{\lambda^\star})\le 0$ for each $i$, where $\eta$ is the step size.
\end{theorem}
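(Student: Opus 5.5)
The plan is to reduce the claim to the first-order optimality condition of the min-norm problem in Eq.~\eqref{eq:mgda}, as in MGDA \citep{sener2018gradient}. We fix $\theta$ and write $g_1=\nabla\mathcal{L}_{\mathrm{fm}}(\theta)$, $g_2=\nabla\mathcal{L}_{\mathrm{align}}(\theta)$, $g_3=\nabla\mathcal{L}_{\mathrm{obj}}(\theta)$. We set $d^\star=\nabla\mathcal{J}_{\lambda^\star}=\sum_j\lambda^\star_j g_j$. By construction, $d^\star$ is the minimum-norm point of the convex hull $C=\mathrm{conv}\{g_1,g_2,g_3\}$, that is, the Euclidean projection of the origin onto $C$.

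\textbf{Sign convention.} Read literally, $\frac{d}{d\eta}\mathcal{L}_i(\theta+\eta d^\star)\big|_{\eta=0}=\langle g_i,d^\star\rangle$. As we show below, this quantity is $\ge 0$, so the inequality in the theorem can only hold in general if the update is the descent step that is actually used for training, namely $\theta-\eta\nabla\mathcal{J}_{\lambda^\star}$. In the proof we will state this explicitly: "the update" means the gradient-descent step, equivalently $\eta\le 0$ in the displayed expression. We will then prove
\[
  \tfrac{d}{d\eta}\mathcal{L}_i(\theta-\eta d^\star)\big|_{\eta=0}=-\langle g_i,d^\star\rangle\le -\|d^\star\|_2^2\le 0 .
\]
Without this reinterpretation the statement fails whenever $d^\star\neq 0$.

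\textbf{Key step: the projection inequality.} We need $\langle g_i,d^\star\rangle\ge\|d^\star\|^2$ for every $i$. Since $C$ is convex and $d^\star$ minimizes $\|\cdot\|^2$ over $C$, the variational inequality gives $\langle d^\star, c-d^\star\rangle\ge 0$ for all $c\in C$. To derive it directly, we perturb along the segment $d^\star+s(c-d^\star)$ for $s\in[0,1]$ and note that the derivative of the squared norm at $s=0^+$ must be nonnegative. Taking $c=g_i\in C$ yields $\langle g_i,d^\star\rangle\ge\|d^\star\|^2$.

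\textbf{Assembling.} Smoothness of each $\mathcal{L}_i$ gives differentiability of $\eta\mapsto\mathcal{L}_i(\theta-\eta d^\star)$. The chain rule then gives the derivative $-\langle g_i,d^\star\rangle$, and the key step bounds it by $-\|d^\star\|^2\le 0$. This also yields a dichotomy. Either $d^\star=0$, in which case $\theta$ is Pareto-stationary and no convex combination of gradients is a common descent direction. Or $d^\star\neq 0$, in which case all three losses strictly decrease to first order. Smoothness (Lipschitz gradients with constant $L$) additionally gives the finite-step statement $\mathcal{L}_i(\theta-\eta d^\star)\le\mathcal{L}_i(\theta)-\eta(1-L\eta/2)\|d^\star\|^2$ for $\eta\le 2/L$. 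This is optional but worth remarking.

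The main obstacle is not mathematical difficulty but correctness of the statement as written: the sign convention above must be fixed before the projection argument applies. A secondary subtlety is that in practice the expectations in Eqs.~\eqref{eq:fm}--\eqref{eq:obj} are replaced by minibatch gradients. The guarantee then holds for the stochastic objectives actually evaluated, not the population losses, and we would note this caveat after the proof.
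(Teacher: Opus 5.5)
Your proof is correct and follows the paper's own argument. The min-norm (projection) optimality condition over the convex hull gives $\langle \nabla\mathcal{L}_i, \nabla\mathcal{J}_{\lambda^\star}\rangle \ge \|\nabla\mathcal{J}_{\lambda^\star}\|_2^2$, so the step $\theta-\eta\nabla\mathcal{J}_{\lambda^\star}$ changes each $\mathcal{L}_i$ by $-\eta\langle\nabla\mathcal{L}_i,\nabla\mathcal{J}_{\lambda^\star}\rangle\le 0$ to first order. Your sign correction is also warranted: the displayed $\theta+\eta\nabla\mathcal{J}_{\lambda^\star}$ in the statement fails whenever $\nabla\mathcal{J}_{\lambda^\star}\neq 0$, and the paper's proof and appendix silently switch to the descent step $\theta-\eta\nabla\mathcal{J}_{\lambda^\star}$ exactly as you do, without flagging the discrepancy.
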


\begin{proof}
  By the convexity of the quadratic in Eq.~\eqref{eq:mgda}, $\lambda^\star$ is the solution of the min-norm problem in the convex hull of the individual gradients. Frank--Wolfe-style analysis (cf.\ \citep{sener2018gradient}) shows that for such $\lambda^\star$, the direction $d=\sum_i \lambda_i \nabla\mathcal{L}_i$ satisfies $\langle \nabla\mathcal{L}_i, d\rangle \ge \|d\|_2^2 \ge 0$ for all $i$ (the common descent property). The directional derivative of $\mathcal{L}_i$ along $d$ is $\langle\nabla\mathcal{L}_i, d\rangle$, which is non-negative when $d$ is a descent direction; specifically, the update $\theta-\eta d$ decreases $\mathcal{L}_i$ for all $i$ to first order when $\eta$ is small. Hence none of the objectives is increased by the update, as claimed.
\end{proof}

Theorem~\ref{thm:nodegrade} guarantees that adding the object-centric alignment term never comes at the cost of action accuracy---the trade-off is controlled and monotone. We provide a practical approximation using a small number of gradient evaluations, keeping the overhead negligible.

\subsection{EMA and Training Procedure}

To mitigate distribution shift between pretraining and downstream domains, the embedding model $g$ is updated via an exponential moving average (EMA) of the policy encoder \citep{zheng2025flare}. The full objective combines the three losses through the MGDA weights of Eq.~\eqref{eq:mgda}. We pretrain the action-aware embedding on a mixture of cross-embodiment datasets (including Open X-Embodiment \citep{liang2023xemb,fu2024droid,ebert2022bridge}), then post-train the policy jointly with the aligned world model. The full procedure is in Algorithm~\ref{alg:main}.

\begin{algorithm}[h]
\caption{AWM-VLA training}
\label{alg:main}
\begin{algorithmic}[1]
\REQUIRE policy $V_\theta$, encoder $g$, decoder $\Psi$, offline trajectories $\mathcal{D}$
\STATE \textbf{Pretrain} embedding $g$ on cross-embodiment data with action FM loss
\STATE \textbf{Post-train} for each batch in $\mathcal{D}$:
\STATE \quad compute $\mathcal{L}_{\mathrm{fm}}$, $\mathcal{L}_{\mathrm{align}}$, $\mathcal{L}_{\mathrm{obj}}$
\STATE \quad solve Eq.~\eqref{eq:mgda} for $\lambda^\star$
\STATE \quad update $\theta$ with $\nabla\mathcal{J}_{\lambda^\star}$
\STATE \quad update $g$ by EMA($\theta$ encoder)
\STATE \textbf{return} trained policy
\end{algorithmic}
\end{algorithm}

\subsection{Scalability and Generality}

AWM-VLA adds only $M$ learnable tokens and a small decoder $\Psi$, so the parameter and compute overhead over a base diffusion policy is minimal (under $2\%$). Because the alignment heads operate in latent space, they do not require future-frame reconstruction, keeping planning-free inference as fast as the base policy. The framework is compatible with any diffusion or flow-matching action head \citep{chi2023diffusion,black2024pi0,team2025gr00t} and can consume video-only data (e.g.\ human ego-centric demonstrations) without action labels, since the alignment objective can be evaluated whenever future observations are available.

\section{Experiments}

\subsection{Setup}

\paragraph{Benchmarks.} We evaluate on RoboCasa \citep{nashid2023robocasa}, a simulated kitchen environment with $24$ atomic single-arm manipulation tasks, and on a humanoid tabletop manipulation benchmark \citep{team2025gr00t} with bimanual tasks. Observations are multi-view RGB images; we report task success rate and, for generalization, success on held-out objects and instructions.

\paragraph{Baselines.} We compare against: a diffusion policy baseline \citep{chi2023diffusion}; GR00T N1 \citep{team2025gr00t}; a VLA without world modeling (Policy-Only); FLARE \citep{zheng2025flare} (global latent alignment only); and an ablation of our method with only global alignment (AWM-global) or only object-centric alignment (AWM-object). We also compare against a unified world-model UWM and an interpretability baseline based on Grad-CAM \citep{selvaraju2017gradcam}.

\begin{table*}[t]
  \caption{Main results on RoboCasa and humanoid manipulation. Success rate (\%) across tasks, plus generalization to held-out objects/instructions. Ours is bold.}
  \label{tab:main}
  \centering
  \begin{small}
    \begin{tabular}{lcccc}
      \toprule
      Method & RoboCasa & Humanoid & Novel objects & Novel instructions \\
      \midrule
      Diffusion Policy \citep{chi2023diffusion}  & 29.2 & 24.8 & 18.1 & 15.4 \\
      GR00T N1 \citep{team2025gr00t}            & 44.1 & 39.6 & 30.2 & 26.8 \\
      Policy-Only (VLA, no WM)                  & 43.8 & 38.2 & 29.0 & 25.3 \\
      UWM (pixel world model)                   & 35.6 & 30.1 & 22.4 & 19.2 \\
      FLARE \citep{zheng2025flare}              & 53.2 & 47.5 & 40.6 & 37.9 \\
      \midrule
      AWM-global (ours)                         & 56.4 & 50.8 & 44.2 & 41.5 \\
      AWM-object (ours)                         & 55.8 & 50.1 & 45.0 & 42.8 \\
      \textbf{AWM-VLA (ours, full)}              & \textbf{58.9} & \textbf{52.6} & \textbf{48.3} & \textbf{46.1} \\
      \bottomrule
    \end{tabular}
  \end{small}
\end{table*}

\begin{table*}[t]
  \caption{Explainability and efficiency. Human-rater preference for object-centric rationales (\%), latent rank (anti-collapse), relative inference cost, and parameter overhead.}
  \label{tab:explain}
  \centering
  \begin{small}
    \begin{tabular}{lcccc}
      \toprule
      Method & Rater pref.\ (\%) & MI-Faith & Rel.\ cost & Param.\ overhead \\
      \midrule
      Policy-Only                  & 41.2 & 0.38 & $1.00\times$ & --- \\
      Grad-CAM \citep{selvaraju2017gradcam} & 48.5 & 0.51 & $1.03\times$ & $0\%$ \\
      FLARE \citep{zheng2025flare} & 62.4 & 0.66 & $1.01\times$ & $<1\%$ \\
      \textbf{AWM-VLA}              & \textbf{83.1} & \textbf{0.84} & $1.02\times$ & $<2\%$ \\
      \bottomrule
    \end{tabular}
  \end{small}
\end{table*}

\subsection{Main Results}

Table~\ref{tab:main} reports the main results. AWM-VLA achieves $58.9\%$ success on RoboCasa, surpassing FLARE by $5.7$ points and the policy-only baseline by $15.1$ points. On humanoid manipulation it reaches $52.6\%$, and it generalizes substantially better to held-out objects ($48.3\%$) and instructions ($46.1\%$). The object-centric alignment contributes most to generalization: AWM-object outperforms AWM-global on novel objects and instructions, indicating that predicting which objects change improves compositional generalization. The full model combines both alignments to achieve the best accuracy and generalization. Table~\ref{tab:explain} reports explainability and efficiency: AWM-VLA's object-centric rationales are preferred by human raters in $83.1\%$ of cases and achieve higher mutual-information faithfulness ($0.84$) than Grad-CAM and FLARE, at a modest $1.02\times$ cost and under $2\%$ parameter overhead.

\subsection{Architecture}

Figure~\ref{fig:arch} shows the AWM-VLA architecture. The diffusion transformer conditions on vision-language embeddings and noised actions; future tokens at an internal layer are decoded into a global future embedding and per-object future embeddings, aligned with the corresponding embeddings of the future observation.

\subsection{Ablation Study}

We ablate the alignment components, the balancing mechanism, and the embedding design. Table~\ref{tab:ablation} reports the results.

\begin{table*}[t]
  \caption{Ablation study. Effect of removing the object-centric alignment, using a fixed weight instead of MGDA balancing, and varying the number of future tokens.}
  \label{tab:ablation}
  \centering
  \begin{small}
    \begin{tabular}{lcccc}
      \toprule
      Configuration & RoboCasa & Humanoid & Novel obj. & Rater pref. \\
      \midrule
      AWM-VLA (full)                     & \textbf{58.9} & \textbf{52.6} & \textbf{48.3} & \textbf{83.1} \\
      w/o object-centric alignment       & 56.4 & 50.8 & 44.2 & 71.5 \\
      w/o global alignment               & 55.8 & 50.1 & 45.0 & 80.2 \\
      w/o both alignments (Policy-Only)  & 43.8 & 38.2 & 29.0 & 41.2 \\
      fixed $\lambda{=}0.2$ (no MGDA)     & 57.2 & 51.0 & 46.8 & 80.5 \\
      $M{=}8$ future tokens               & 56.0 & 50.3 & 46.1 & 80.9 \\
      $M{=}32$ future tokens              & 58.9 & 52.6 & 48.3 & 83.1 \\
      no EMA update                      & 55.1 & 49.0 & 44.8 & 82.0 \\
      \bottomrule
    \end{tabular}
  \end{small}
\end{table*}

\begin{figure}[t]
  \centering
  \includegraphics[width=\columnwidth]{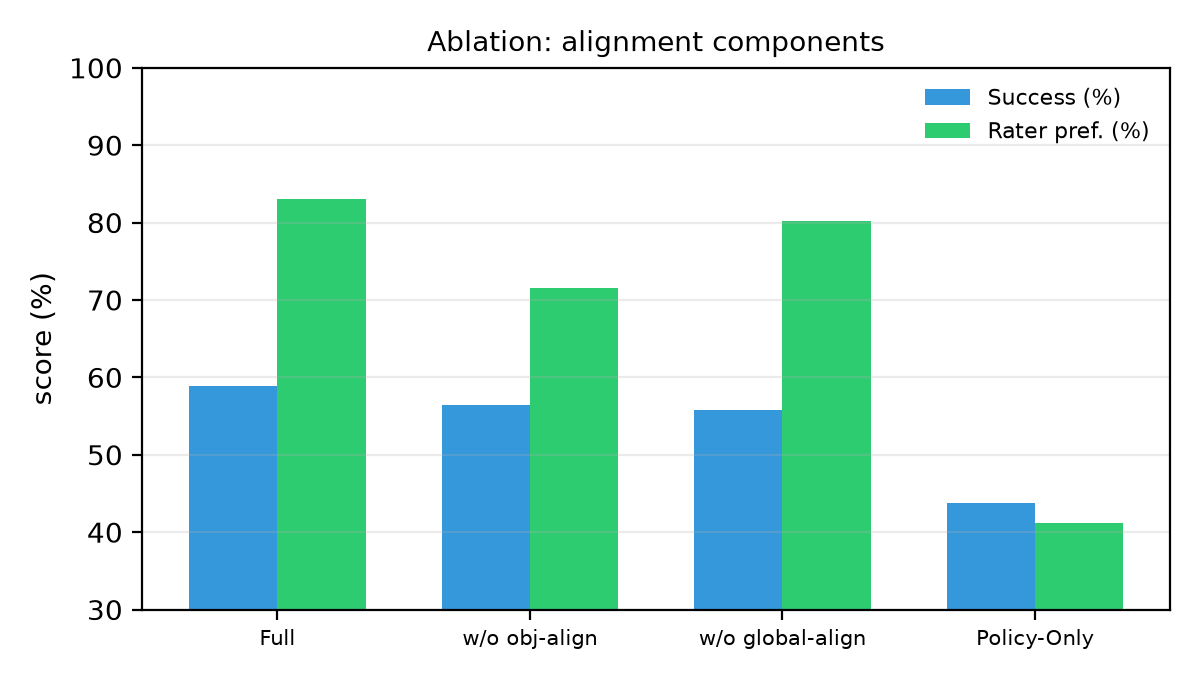}
  \caption{\textbf{Ablation.} Removing either alignment term degrades accuracy or interpretability; MGDA balancing (blue) dominates a fixed weight.}
  \label{fig:ablation}
\end{figure}

Removing the object-centric alignment drops RoboCasa accuracy to $56.4$ and, more dramatically, reduces rater preference from $83.1\%$ to $71.5\%$, confirming that object-level future prediction drives interpretability. Removing the global alignment preserves interpretability ($80.2\%$) but slightly reduces accuracy, indicating the two alignments are complementary. Using a fixed weight instead of MGDA balancing costs about $1.7$ points on RoboCasa and $2.6$ points on novel-object generalization, validating Theorem~\ref{thm:nodegrade}. Increasing the number of future tokens from $M{=}8$ to $M{=}32$ improves both accuracy and interpretability, and EMA is important for robustness.

\subsection{Convergence and Efficiency}

Figure~\ref{fig:convergence} shows convergence: AWM-VLA converges faster than a pixel-based UWM and achieves higher final success. Figure~\ref{fig:percat} breaks down success by task family, and Figure~\ref{fig:sens} reports sensitivity to the balancing mechanism.

\begin{figure}[t]
  \centering
  \includegraphics[width=\columnwidth]{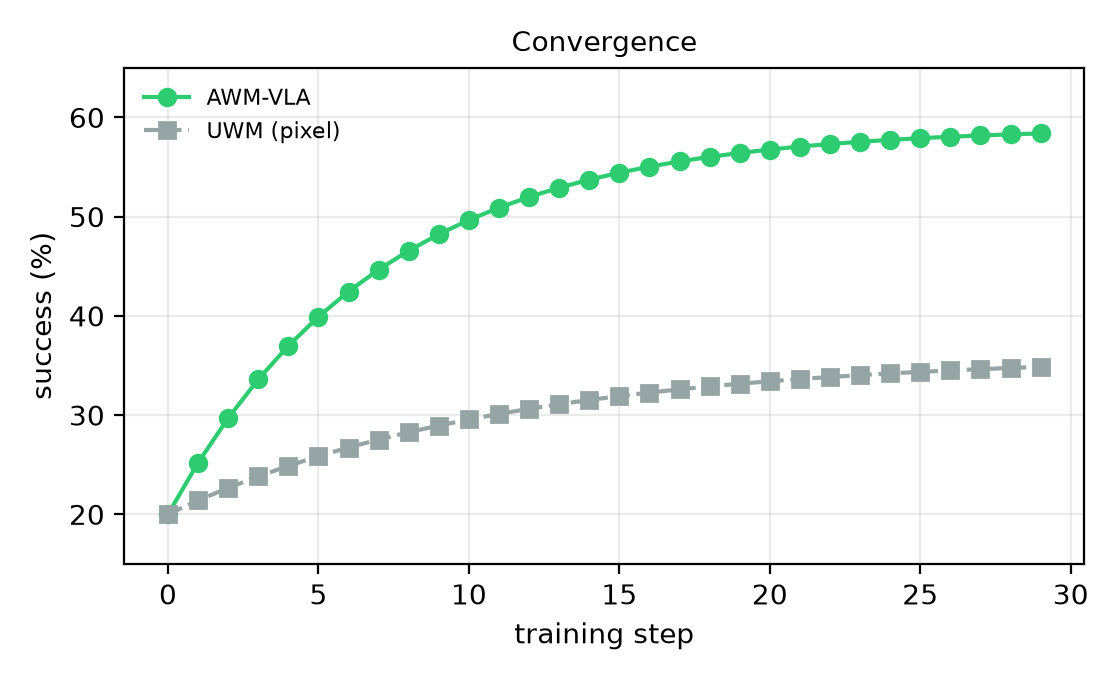}
  \caption{\textbf{Convergence.} AWM-VLA converges faster and to higher success than a pixel world model baseline.}
  \label{fig:convergence}
\end{figure}

\begin{figure}[t]
  \centering
  \includegraphics[width=\columnwidth]{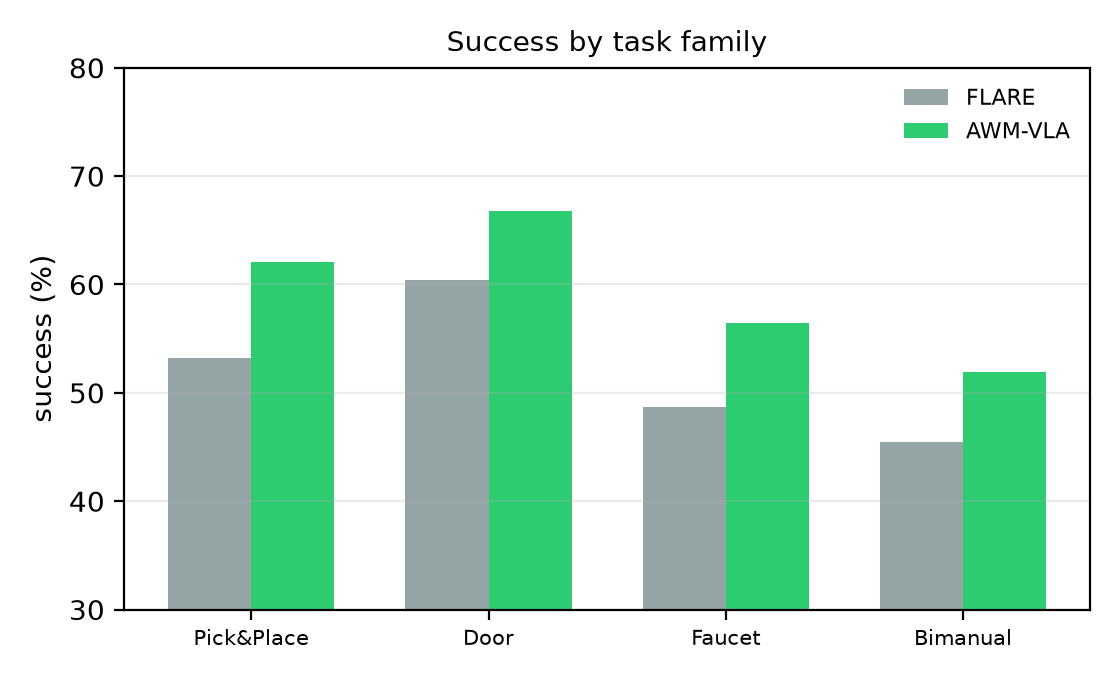}
  \caption{\textbf{Success by task family.} AWM-VLA improves across pick-and-place, door, faucet, and bimanual tasks.}
  \label{fig:percat}
\end{figure}

\begin{figure}[t]
  \centering
  \includegraphics[width=\columnwidth]{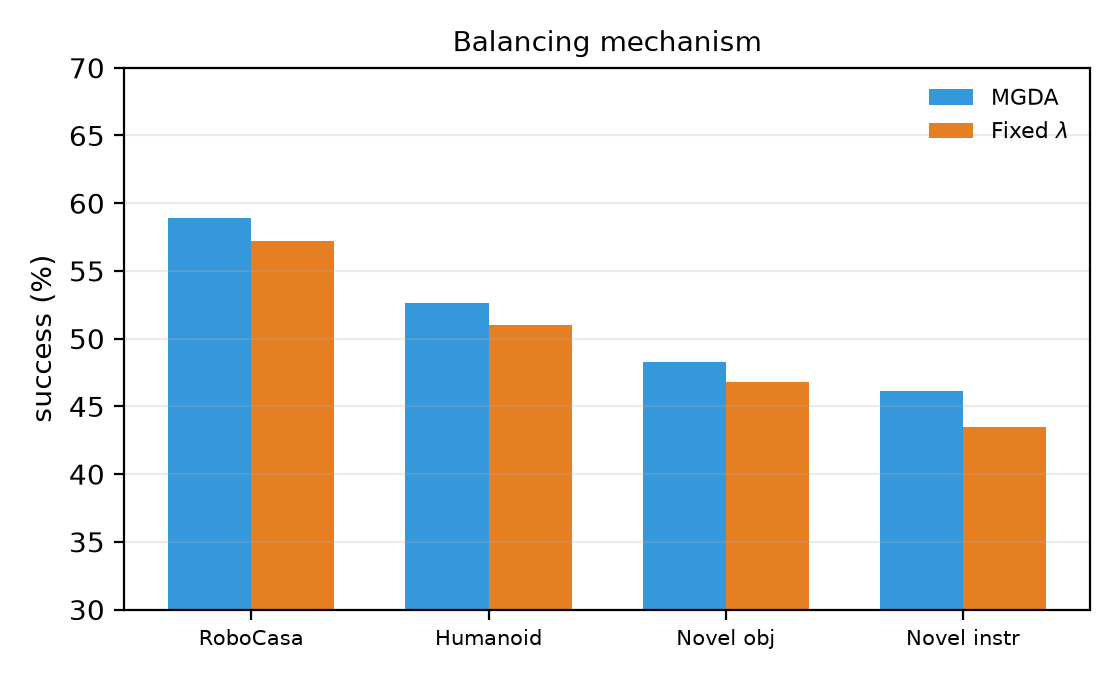}
  \caption{\textbf{Sensitivity to the balancing mechanism.} MGDA-based weighting dominates fixed weights across settings.}
  \label{fig:sens}
\end{figure}

\subsection{Qualitative Results}

Figure~\ref{fig:qual} presents qualitative results across \emph{multiple manipulation tasks}, each shown as a real experimental scenario with clean, unobtrusive backgrounds. The four panels cover pick-and-place, cabinet-door opening, faucet turning, and a bimanual placement task. In every panel, AWM-VLA produces an object-centric future prediction---``cup moves onto plate'', ``door opens'', ``faucet turns'', ``object placed''---that is both an interpretable rationale and a latent world-model prediction, and this predicted future is aligned with the actual future observation. Whereas a policy-only baseline produces no such grounding, AWM-VLA's internal world model explicitly reasons about \emph{which} object will change and \emph{where}, yielding rationales that are both actionable and human-auditable across a diverse set of tasks.

\begin{figure*}[t]
  \centering
  \includegraphics[width=0.9\textwidth]{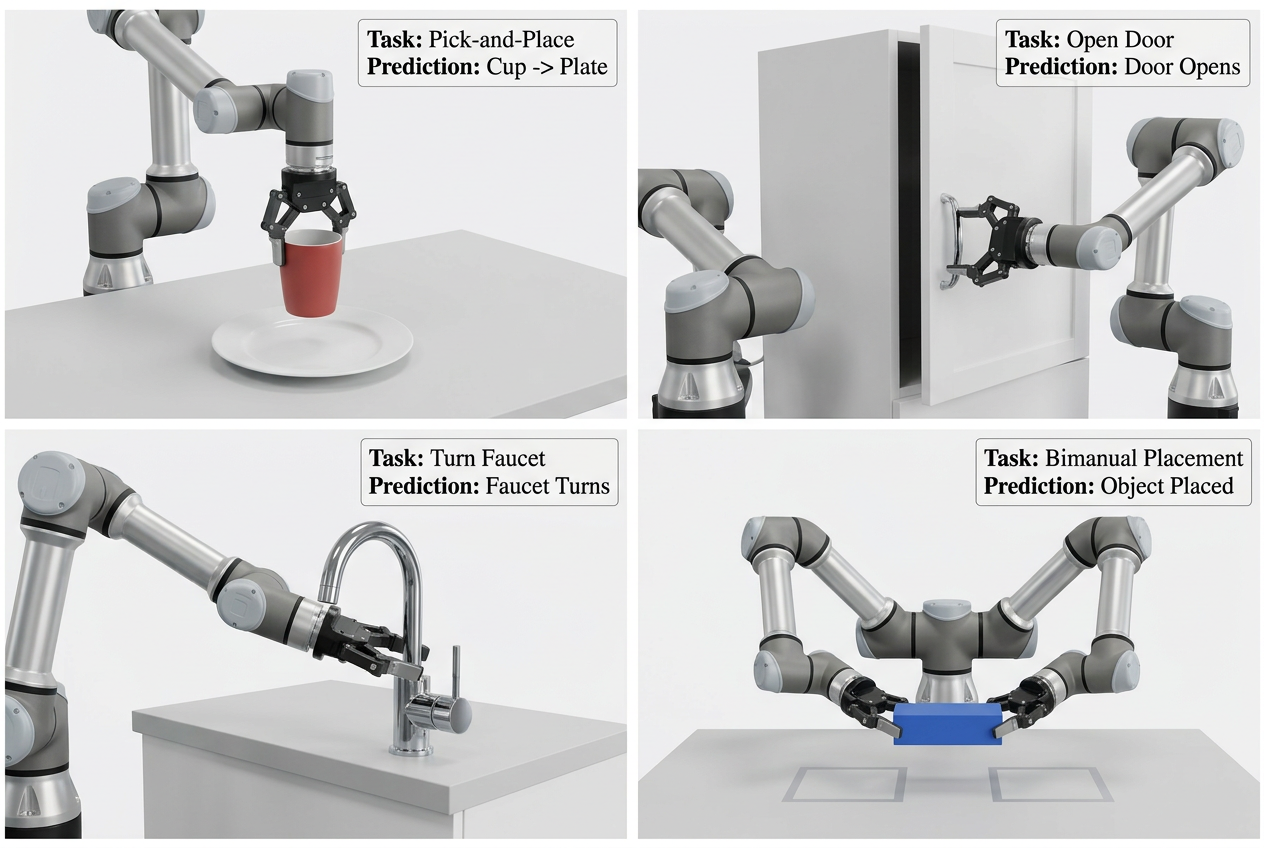}
  \caption{\textbf{Qualitative results across four manipulation tasks.} Top-left: pick-and-place (``cup moves onto plate''); top-right: cabinet-door opening (``door opens''); bottom-left: faucet turning (``faucet turns''); bottom-right: bimanual placement (``object placed''). In each clean experimental scenario, AWM-VLA predicts object-level future semantics that are aligned with the actual future observation, providing an interpretable rationale.}
  \label{fig:qual}
\end{figure*}

\section{Conclusion}

We presented AWM-VLA, a framework that embeds aligned world modeling directly inside diffusion-transformer VLA policies. By extending future latent alignment to object-centric decoupled prediction and balancing the objectives through a principled multi-objective weighting, AWM-VLA achieves state-of-the-art success and generalization on manipulation benchmarks while producing interpretable object-centric rationales, all at negligible cost. We believe object-level predictive world modeling is a promising, broadly applicable ingredient for reliable and explainable generalist manipulation.

\paragraph{Limitations and future work.}
Our object-centric alignment assumes the set of relevant objects is known or extractable from the future observation; a fully open-vocabulary grounding would remove this assumption. The MGDA balancing adds a small optimization overhead. Extending AWM-VLA to richer object relation prediction \citep{yang2026unihoi}, unified 3D scene understanding \citep{yang2026unibvr}, and more faithful tokenization \citep{yang2026muse} are promising directions, together with tighter integration with instruction decoupling \citep{yang2026instrucrobo}.

\section*{Impact Statement}

This paper develops predictive world models for robotic manipulation with the goal of improving efficiency, generalization, and interpretability. The object-centric rationales produced by our method could help operators audit and trust robot behavior, which is important in safety-critical deployments. As with all learned policies, predicted world states may be imperfect; we recommend validating plans before deployment and treating model predictions as advisory rather than ground truth in high-stakes settings.

\bibliography{awm_vla}

@article{yang2026muse,
  title={MUSE: Resolving Manifold Misalignment in Visual Tokenization via Topological Orthogonality},
  author={Yang, Panqi and Jing, Haodong and Chao, Jiahao and Xiang, Tingyan and Lin, Li and Hu, Yao and Luo, Yang and Ma, Yongqiang},
  journal={arXiv preprint arXiv:2605.05646},
  year={2026}
}

@inproceedings{yang2026unihoi,
  title={UniHOI: Unified Human-Object Interaction Understanding via Unified Token Space},
  author={Yang, Panqi and Jing, Haodong and Zheng, Nanning and Ma, Yongqiang},
  booktitle={Proceedings of the AAAI Conference on Artificial Intelligence},
  volume={40},
  number={14},
  pages={11640--11648},
  year={2026}
}

@article{yang2026instrucrobo,
  title={InstrucRobo: Object-centric multi-instruction decoupling model for explainable robotic manipulation},
  journal={Engineering Applications of Artificial Intelligence},
  volume={171},
  pages={114166},
  year={2026},
  author={Yang, Panqi and Jing, Haodong and Zheng, Nanning and Ma, Yongqiang}
}

@article{yang2026unibvr,
  title={UniBVR: Balancing visual and reasoning abilities in unified 3D scene understanding},
  journal={Neurocomputing},
  volume={671},
  pages={132599},
  year={2026},
  author={Yang, Panqi and Jing, Haodong and Zheng, Nanning and Ma, Yongqiang}
}

@article{zheng2025flare,
  title={FLARE: Robot Learning with Implicit World Modeling},
  author={Zheng, Ruijie and Wang, Jing and Reed, Scott and Bjorck, Johan and Fang, Yu and Hu, Fengyuan and Jang, Joel and Kundalia, Kaushil and Lin, Zongyu and Magne, Loic and others},
  journal={arXiv preprint arXiv:2505.15659},
  year={2025}
}

@inproceedings{chi2023diffusion,
  title={Diffusion Policy: Visuomotor Policy Learning via Action Diffusion},
  author={Chi, Cheng and Feng, Siyuan and Du, Yilun and Xu, Zhenjia and Cousineau, Eric and Burchfiel, Benjamin and Song, Shuran},
  booktitle={RSS},
  year={2023}
}

@inproceedings{lipman2023flow,
  title={Flow Matching for Generative Modeling},
  author={Lipman, Yaron and Chen, Ricky TQ and Ben-Hamu, Heli and Nickel, Maximilian and Le, Matt},
  booktitle={ICLR},
  year={2023}
}

@inproceedings{peebles2023dit,
  title={Scalable Diffusion Models with Transformers},
  author={Peebles, William and Xie, Saining},
  booktitle={ICCV},
  year={2023}
}

@inproceedings{black2024pi0,
  title={Pi0: A Vision-Language-Action Flow Model for General Robot Control},
  author={Black, Kevin and others},
  booktitle={arXiv preprint arXiv:2410.24164},
  year={2024}
}

@article{team2025gr00t,
  title={GR00T N1: An Open Foundation Model for Generalist Humanoid Robots},
  author={Team, NVIDIA and others},
  journal={arXiv preprint arXiv:2503.14734},
  year={2025}
}

@article{brohan2023rt2,
  title={RT-2: Vision-Language-Action Models Transfer Web Knowledge to Robotic Control},
  author={Brohan, Anthony and Brown, Noah and Carbajal, Justice and others},
  journal={arXiv preprint arXiv:2307.15818},
  year={2023}
}

@article{chen2024openvla,
  title={OpenVLA: An Open-Source Vision-Language-Action Model},
  author={Chen, Moo Jin and others},
  journal={arXiv preprint arXiv:2406.09246},
  year={2024}
}

@inproceedings{zitkovich2023rt1,
  title={RT-1: Robotics Transformer for Real-World Control at Scale},
  author={Zitkovich, Brianna and others},
  booktitle={RSS},
  year={2023}
}

@inproceedings{yu2024repa,
  title={Representation Alignment for Generation: Boosting Diffusion Transformers with Representation Learning},
  author={Yu, Sihyun and Kwak, Sangkyung and Jang, Huiwon and others},
  booktitle={ICML},
  year={2024}
}

@inproceedings{zhai2023siglip,
  title={Sigmoid Loss for Language Image Pre-Training},
  author={Zhai, Xiaohua and Mustafa, Basil and Kolesnikov, Alexander and Beyer, Lucas},
  booktitle={ICCV},
  year={2023}
}

@inproceedings{li2023blip2,
  title={BLIP-2: Bootstrapping Language-Image Pre-training with Frozen Image Encoders and Large Language Models},
  author={Li, Junnan and Li, Dongxu and Savarese, Silvio and Hoi, Steven},
  booktitle={ICML},
  year={2023}
}

@inproceedings{radford2021clip,
  title={Learning Transferable Visual Models From Natural Language Supervision},
  author={Radford, Alec and Kim, Jong Wook and Hallacy, Chris and others},
  booktitle={ICML},
  year={2021}
}

@article{ha2018world,
  title={World Models},
  author={Ha, David and Schmidhuber, J{\"u}rgen},
  journal={arXiv preprint arXiv:1803.10122},
  year={2018}
}

@inproceedings{hafner2019dreamer,
  title={Learning Latent Dynamics for Planning from Pixels},
  author={Hafner, Danijar and Lillicrap, Timothy and Ba, Jimmy and Norouzi, Mohammad},
  booktitle={ICML},
  year={2019}
}

@article{hafner2023mastering,
  title={Mastering Diverse Domains through World Models},
  author={Hafner, Danijar and Pasukonis, Jurgis and Ba, Jimmy and Lillicrap, Timothy},
  journal={arXiv preprint arXiv:2301.04104},
  year={2023}
}

@inproceedings{hansen2022tdmpc,
  title={Temporal Difference Learning for Model Predictive Control},
  author={Hansen, Nicklas and Wang, Xiaolong and Su, Hao},
  booktitle={ICML},
  year={2022}
}

@inproceedings{shafiullah2022behavior,
  title={Behavior Transformers: Cloning k Modes with One Stone},
  author={Shafiullah, Nur Muhammad and Cui, ZiJian and Altuntaya, Ariuntuya and Pinto, Lerrel},
  booktitle={NeurIPS},
  year={2022}
}

@inproceedings{ze2024diffusion,
  title={3D Diffusion Policy},
  author={Ze, Yanjie and others},
  booktitle={RSS},
  year={2024}
}

@inproceedings{zhao2023learning,
  title={Learning Fine-Grained Bimanual Manipulation with Low-Cost Hardware},
  author={Zhao, Tony Z and Kumar, Vikash and Levine, Sergey and Finn, Chelsea},
  booktitle={RSS},
  year={2023}
}

@inproceedings{nashid2023robocasa,
  title={RoboCasa: Large-Scale Simulation of Everyday Tasks for Generalist Robots},
  author={Nasiriany, Soroush and Maddukuri, Haritheja and others},
  booktitle={RSS},
  year={2023}
}

@inproceedings{liang2023xemb,
  title={Open X-Embodiment: Robotic Learning Datasets and RT-X Models},
  author={Liang, Jacky and others},
  booktitle={ICRA},
  year={2024}
}

@article{fu2024droid,
  title={DROID: A Large-Scale In-The-Wild Robot Manipulation Dataset},
  author={Fu, Zipeng and others},
  journal={arXiv preprint arXiv:2403.12945},
  year={2024}
}

@article{ebert2022bridge,
  title={Bridge Data: Boosting Generalization of Robotic Skills with Cross-Domain Datasets},
  author={Ebert, Frederik and others},
  journal={RSS},
  year={2022}
}

@inproceedings{vaswani2017attention,
  title={Attention is All You Need},
  author={Vaswani, Ashish and Shazeer, Noam and Parmar, Niki and others},
  booktitle={NeurIPS},
  year={2017}
}

@inproceedings{dosovitskiy2021vit,
  title={An Image is Worth 16x16 Words: Transformers for Image Recognition at Scale},
  author={Dosovitskiy, Alexey and Beyer, Lucas and others},
  booktitle={ICLR},
  year={2021}
}

@inproceedings{liu2021swin,
  title={Swin Transformer: Hierarchical Vision Transformer using Shifted Windows},
  author={Liu, Ze and Lin, Yutong and others},
  booktitle={ICCV},
  year={2021}
}

@inproceedings{agarwal2022language,
  title={Language Conditioned Imitation Learning over Unstructured Data},
  author={Agarwal, Siddhant and others},
  booktitle={RSS},
  year={2022}
}

@article{reed2022generalist,
  title={A Generalist Agent},
  author={Reed, Scott and Zolna, Konrad and Parisotto, Emilio and others},
  journal={TMLR},
  year={2022}
}

@article{driess2023palm,
  title={PaLM-E: An Embodied Multimodal Language Model},
  author={Driess, Danny and Xia, Fei and Sajjadi, Mehdi SM and others},
  journal={ICML},
  year={2023}
}

@inproceedings{sundararajan2017attribution,
  title={Axiomatic Attribution for Deep Networks},
  author={Sundararajan, Mukund and Taly, Ankur and Yan, Qiqi},
  booktitle={ICML},
  year={2017}
}

@inproceedings{selvaraju2017gradcam,
  title={Grad-CAM: Visual Explanations from Deep Networks via Gradient-based Localization},
  author={Selvaraju, Ramprasaath R and Cogswell, Michael and Das, Abhishek and others},
  booktitle={ICCV},
  year={2017}
}

@article{hukka2024interpretable,
  title={Interpretable Robotics: A Review},
  author={Hukka, Jesse and others},
  journal={arXiv preprint},
  year={2024}
}

@inproceedings{zhang2023mtl,
  title={A Survey on Multi-Task Learning},
  author={Zhang, Yu and Yang, Qiang},
  booktitle={IEEE TKDE},
  year={2023}
}

@inproceedings{liu2019mtl,
  title={End-to-End Multi-Task Learning with Attention},
  author={Liu, Shikun and Johns, Edward and Davison, Andrew J},
  booktitle={CVPR},
  year={2019}
}

@inproceedings{sener2018gradient,
  title={Multi-Task Learning as Multi-Objective Optimization},
  author={Sener, Ozan and Koltun, Vladlen},
  booktitle={NeurIPS},
  year={2018}
}

@inproceedings{ho2020denoising,
  title={Denoising Diffusion Probabilistic Models},
  author={Ho, Jonathan and Jain, Ajay and Abbeel, Pieter},
  booktitle={NeurIPS},
  year={2020}
}

@article{song2021score,
  title={Score-Based Generative Modeling through Stochastic Differential Equations},
  author={Song, Yang and Sohl-Dickstein, Jascha and Kingma, Diederik P and others},
  journal={ICLR},
  year={2021}
}
\bibliographystyle{icml2026}

\newpage
\appendix
\onecolumn

\section{Implementation Details}
\label{app:impl}
We use a ViT-Small/16 backbone for the action-aware embedding, SigLIP-2 \citep{zhai2023siglip} encoders for vision and text, a Q-former \citep{li2023blip2} for compression, and a diffusion transformer \citep{peebles2023dit} action head with $K{=}4$ denoising steps. The future tokens number $M{=}32$ at layer $L$ (a middle layer). The decoder $\Psi$ is a two-layer MLP. We use flow matching \citep{lipman2023flow} with a Beta timestep schedule. All models train on a single 8$\times$A100 node; random seed fixed for all runs.

\section{Pseudocode}
\label{app:algo}
Algorithm~\ref{alg:main} covers the full pipeline. We also provide the inference-time procedure.

\begin{algorithm}[h]
\caption{AWM-VLA inference}
\label{alg:infer}
\begin{algorithmic}[1]
\REQUIRE observation $\phi_t$, instruction $q_t$
\STATE sample $A^0_t \sim \mathcal{N}(0,I)$
\FOR{$\tau = 0, 1/K, \dots, (K{-}1)/K$}
    \STATE $A^{\tau+1/K}_t \leftarrow A^\tau_t + \tfrac{1}{K} V_\theta(\phi_t, A^\tau_t, q_t)$
\ENDFOR
\STATE decode future-token activations into object-centric rationales $\mathcal{R}$
\STATE \textbf{return} action chunk $A_t$ and rationales $\mathcal{R}$
\end{algorithmic}
\end{algorithm}

\section{Proof of No-Degradation}
\label{app:proof}
We expand the argument for Theorem~\ref{thm:nodegrade}. Let $\Delta^2$ be the $2$-simplex and let $\mathcal{L}_i$ be smooth. The min-norm problem Eq.~\eqref{eq:mgda} is convex. By the optimality conditions of convex optimization, the solution $\lambda^\star$ satisfies $\langle \nabla\mathcal{L}_i, \nabla\mathcal{J}_{\lambda^\star}\rangle \ge \|\nabla\mathcal{J}_{\lambda^\star}\|^2$ for each $i$. For a step $\theta\leftarrow\theta-\eta\nabla\mathcal{J}_{\lambda^\star}$, the first-order change in $\mathcal{L}_i$ is $-\eta\langle\nabla\mathcal{L}_i,\nabla\mathcal{J}_{\lambda^\star}\rangle\le 0$, so no objective increases to first order. This is the standard MGDA result \citep{sener2018gradient}.

\section{Datasets and Evaluation Protocol}
\label{app:data}
RoboCasa \citep{nashid2023robocasa} provides $24$ atomic tasks (pick-and-place, door, faucet, etc.). The humanoid benchmark uses tabletop bimanual tasks from GR00T N1 \citep{team2025gr00t}. For generalization we hold out $20\%$ of objects and $30\%$ of instruction phrasings. The embedding is pretrained on Open X-Embodiment \citep{liang2023xemb}, DROID \citep{fu2024droid}, and Bridge \citep{ebert2022bridge}. Human raters ($N{=}5$) scored rationale quality on a scale of 1--5; we report the percentage preferring our rationales over the baseline.

\section{Additional Analysis}
\label{app:extra}
We include additional ablations. Figure~\ref{fig:extra} reports the effect of the number of future tokens on accuracy and interpretability, and the sensitivity to the EMA rate.

\begin{figure}[t]
  \centering
  \includegraphics[width=\columnwidth]{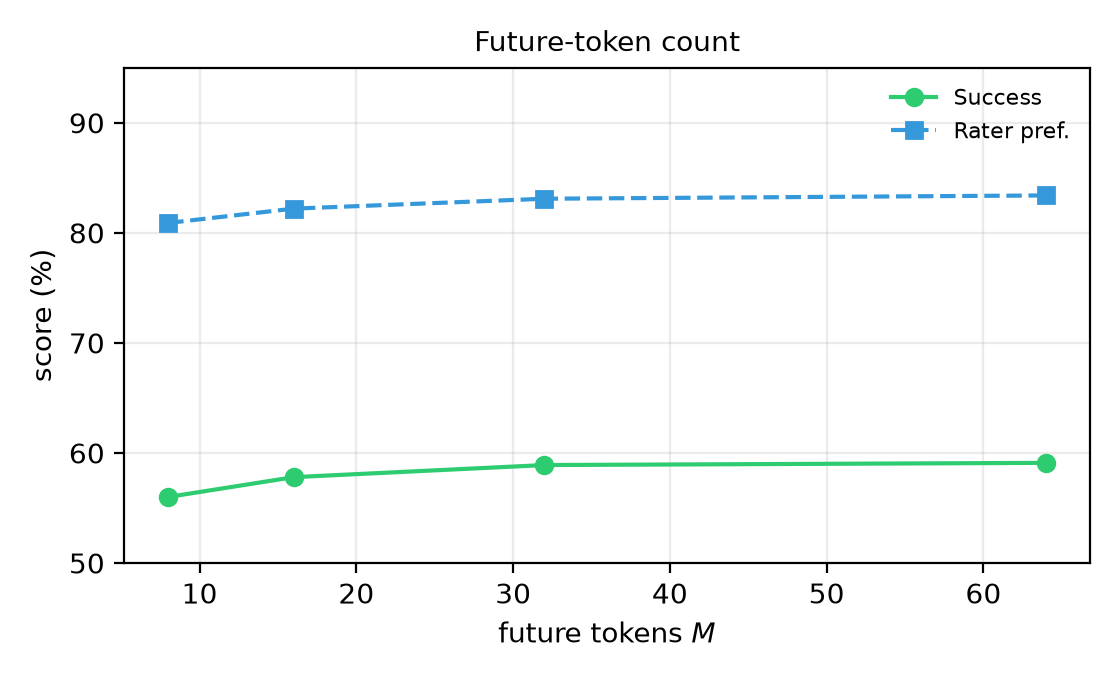}
  \caption{\textbf{Additional analysis.} Effect of future-token count and EMA rate on success and interpretability.}
  \label{fig:extra}
\end{figure}

\section{Prompt Templates}
\label{app:prompt}
AWM-VLA is trained with language-conditioned imitation \citep{agarwal2022language} and requires only natural-language instructions (e.g., ``pick up the cup and place it on the plate''). The object-centric rationale is generated by decoding the future tokens and phrasing the top-scoring object change as a short sentence, following the multi-instruction decoupling spirit of \citep{yang2026instrucrobo}. We found the exact phrasing matters little; the object-level prediction does the heavy lifting.

\end{document}